\documentclass[letterpaper]{article} % DO NOT CHANGE THIS
\usepackage[preprint]{aaai2027}  % DO NOT CHANGE THIS
\usepackage[hyphens]{url}  % DO NOT CHANGE THIS
\usepackage{graphicx} % DO NOT CHANGE THIS
\usepackage{natbib}  % DO NOT CHANGE THIS AND DO NOT ADD ANY OPTIONS TO IT
\usepackage{caption} % DO NOT CHANGE THIS AND DO NOT ADD ANY OPTIONS TO IT
\usepackage{algorithm}
\usepackage{algpseudocode}% For pseudocode in algorithms

\usepackage{amsmath}
\usepackage{longtable}

\usepackage{amssymb}      % Additional math symbols
\usepackage{amsthm}       % Theorem environments
\usepackage{enumitem}     % Customizable lists
\usepackage{times}         % Times New Roman font
\usepackage{booktabs}
\usepackage{multirow}
\usepackage{tabularx}
\usepackage{mathrsfs}

\usepackage{algorithm}

\newtheorem{proposition}{Proposition}

\usepackage{newfloat}
\usepackage{listings}
\DeclareCaptionStyle{ruled}{labelfont=normalfont,labelsep=colon,strut=off} % DO NOT CHANGE THIS
\floatstyle{ruled}
\newfloat{listing}{tb}{lst}{}
\floatname{listing}{Listing}

\usepackage{booktabs}

\title{Not All Ranks Are Equal: Budget-Aware LoRA Merging Across Tasks}
\author{
    Avinash Amballa,
    Yashas Malur Saidutta,
    Wenbo Li,
    Lazar Valkov, 
    Srinivas Chappidi
}
\affiliations{
    \textbf{Samsung Research America}
    \{a.amballa, ym.saidutta, wenbo.li1, lazar.valkov, vasu.c\}@samsung.com
}

\begin{document}

\maketitle

\begin{abstract}
Merging low-rank adapters (LoRAs) promises to eliminate the overhead of swapping task-specific weights at inference time. However, existing merging methods assume every layer needs the same rank budget. Further, some methods assume that rank budget needs to be split equally among the tasks too. We show this uniform-budget assumption is a major source of the performance gap between merged and per-task LoRAs. However, rank selection is an NP hard problem. To this end, we introduce \textbf{Net Utility}, a data free metric that first decomposes every task LoRA by its Singular Value Decomposition (SVD) and scores each of those singular directions by its task utility and its interference with other tasks’ directions. Next, we globally pool these scores to select singular directions with the highest values with a constraint on the total number of directions selected. The proposed Net Utility metric is applied on top of five different merging methods across three different merging spaces. The merging is done over two sets of tasks, vision and language tasks. Net utility based rank allocation outperforms its counterparts without that allocation. On average, over vision tasks it achieves +2.1\% improvement in performance, and +2.2\% improvement over the language tasks.
\end{abstract}

\section{Introduction}

% Introduction for: "Not All Ranks Are Equal: Budget-Aware LoRA Merging Across Tasks"
% AAAI format. AAAI's natbib setup makes \cite{} parenthetical -- (Author et al. 2024) --
% and \citet{} textual, so \cite is used here rather than \citep.

\label{sec:intro}
To overcome the inefficiencies of full-tuning of foundation models, parameter efficient fine-tuning has become a well adopted alternative. Low Rank Adaptation (LoRA)~\cite{hu2022lora} writes the weight update as a product of low-rank factors, $\Delta W = BA$, yielding a simpler deployment pattern, one frozen backbone plus a library of task-specific adapters. Serving $n$ tasks then means keeping $n$ adapters and swapping the active one based on the task. However this results in issues especially on on-device applications, where hot-swapping needs to be performed. LoRA Merging collapses them into a single low-rank adapter that serves every task with no routing or swapping.

\begin{figure}[t]
    \centering
\includegraphics[clip, width=0.45\textwidth]{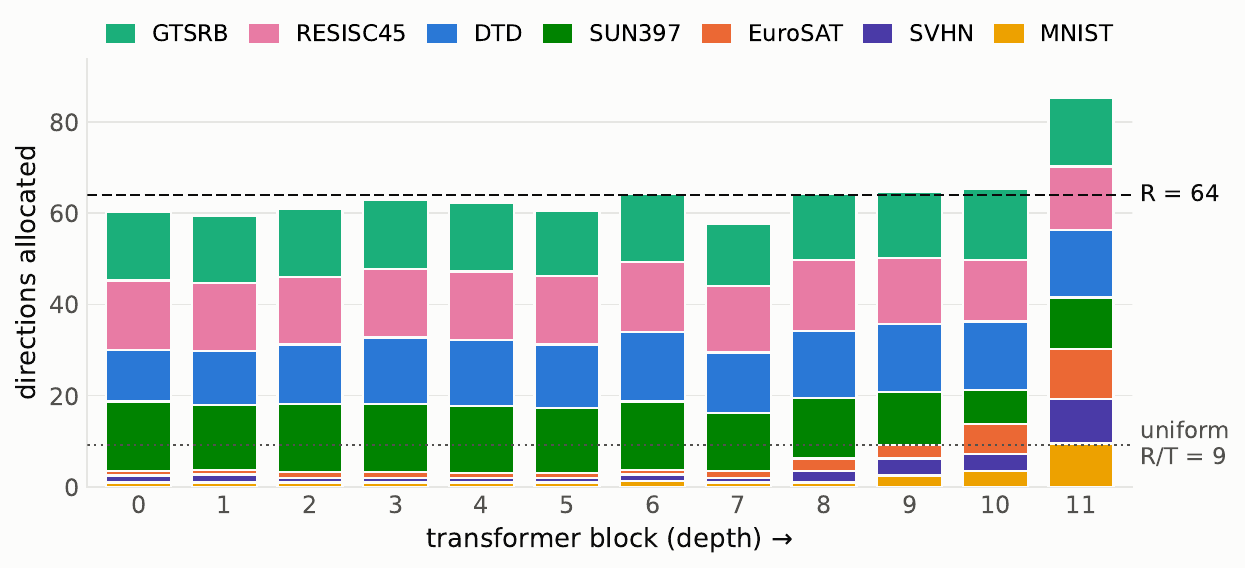}
    \caption{Ranks allocated to each task at different layers on a ViT-B/32 over seven vision tasks. We observe that different layers have different ranks indicating that certain layers are more important for LoRA updates than the others. We also observe that some tasks have less allocation at the lower layers and more at the later layers. The merging method is Task Arithmetic performed over the full space. Our non-uniform allocation improves accuracy by +3.4\%}
    \label{fig:hyp1-1}
\end{figure}

However, serving cost scales directly with adapter rank rather than adapter count:
decoding latency grows linearly in the ranks present in a batch, and mixed-rank
adapters force low-rank requests to pay the cost of the highest rank in the
batch~\cite{li2025toppings, jaiswal2025serving}. Deployment often fixes a rank,
independent of what training would have chosen~\cite{vulic2026lorasqueeze}. This is especially more crucial on on-device or edge applications. While many works have looked into the problem of merging \cite{yadav2023tiesmergingresolvinginterferencemerging, yu2024languagemodelssupermario, gargiulo2025tasksingularvectorsreducing, marczak2025taskleftbehindisotropic, panariello2025corespace}, but nearly all spread the rank budget uniformly across layers, i.e., every layer gets the same rank and some like \cite{gargiulo2025tasksingularvectorsreducing} explicitly assign every task an equal share. LoRA updates do not behave this way spectral energy can be distributed very unevenly across layers, and tasks can differ in which layers are important to them. So a uniform budget leads to under representation in some layers and over representation in others. In Figure~\ref{fig:hyp1-1} we show that relaxing this assumption leads to almost +3.4\% increase in average performance when tested across seven vision datasets with task arithmetic merging. This indicates that a rank allocation that takes layers and tasks into account during merging leads to superior merged adapters.

However, finding this optimal budget allocation is an NP hard problem. Hence, we introduce \textbf{Net Utility}, a criterion that helps us decide \emph{where} rank should be spent in a greedy manner. The SVD of each task's update yields candidate singular directions spanning every layer, and task; we score each by its contribution to reconstructing its own task's update, minus its conflict with the directions of other tasks, which gives a metric to score every singular direction. The score depends only on the adapter weights, is hence \textbf{data-free}. We pool all singular direction's net utility into a global pool and this allows us to design a method where the top-scoring directions are retained.

Our contributions are:
\begin{itemize}
    \item We identify uniform rank allocation, rather than the merging operation itself, as a major source of the gap between merged and per-task LoRAs.
    \item We propose Net Utility, a data-free score trading a singular direction's usefulness to its own task against its interference with other tasks, recasting merging as globally budget-constrained selection. 
    \item At matched budgets, on 7 vision and 6 language tasks, our Net Utility based allocation improves multiple prior merging methods over various merging spaces by an average of +2.1\% absolute, with some method-space combinations achieving as much as +3.8\% absolute.
\end{itemize}

\section{Related Work}

% Related Work for: "Not All Ranks Are Equal: Budget-Aware LoRA Merging Across Tasks"
% AAAI format: \cite{} is parenthetical, \citet{} is textual.
%
% Keys from your list (should be correct): yadav2023tiesmergingresolvinginterferencemerging,
% yu2024languagemodelssupermario, stoica2024modelmergingsvdtie,
% gargiulo2025tasksingularvectorsreducing, du2024parametercompetitionbalancingmodel,
% zheng2025decoupleorthogonalizedatafreeframework, qiao2025mergeforgetsinglelora,
% lee2026adarankadaptiverankpruning, he2026compressmergemultipleloras,
% marczak2025taskleftbehindisotropic, zhao2024mergingloraslikeplaying
%
% PLACEHOLDER keys you must verify: ilharco2023taskarithmetic, davari2024breadcrumbs, zhang2025lori, lee2026prime, shenaj2026kmerge, panariello2025corespace

% Related Work for: "Not All Ranks Are Equal: Budget-Aware LoRA Merging Across Tasks"
% AAAI format: \cite{} is parenthetical, \citet{} is textual.
%
% Keys from your list (should be correct): yadav2023tiesmergingresolvinginterferencemerging,
% yu2024languagemodelssupermario, stoica2024modelmergingsvdtie,
% gargiulo2025tasksingularvectorsreducing, du2024parametercompetitionbalancingmodel,
% zheng2025decoupleorthogonalizedatafreeframework, qiao2025mergeforgetsinglelora,
% lee2026adarankadaptiverankpruning, he2026compressmergemultipleloras,
% marczak2025taskleftbehindisotropic, zhao2024mergingloraslikeplaying
%
% PLACEHOLDER keys you must verify: ilharco2023taskarithmetic, davari2024breadcrumbs, zhang2025lori, lee2026prime, shenaj2026kmerge, panariello2025corespace

\label{sec:related}

\textbf{Task-vector merging:}
Task Arithmetic~\cite{ilharco2023taskarithmetic} sums the differences between fine-tuned and pretrained weights to obtain a multi-task model. Because independently trained vectors overlap, later work sparsifies before summing: by magnitude with a consensus sign (TIES~\cite{yadav2023tiesmergingresolvinginterferencemerging}), by random dropping with rescaling (DARE~\cite{yu2024languagemodelssupermario}), by discarding outliers as well as negligible weights (Breadcrumbs~\cite{davari2024breadcrumbs}), or by within-task importance weighted by its agreement across tasks (PCB-Merging~\cite{du2024parametercompetitionbalancingmodel}). We share PCB's logic to a degree: value to one's own task. However, PCB looks at inter-balancing, which is the benefit of a parameter to another task. Additionally, we score individual directions rather than parameters and use the scores to assign ranks rather than set a fixed sparsity ratio.

\noindent \textbf{Spectral and subspace merging:}
A second line merges in spectral bases: under a shared left singular basis (KnOTS~\cite{stoica2024modelmergingsvdtie}), by orthogonalizing truncated singular vectors across tasks (TSV-Merging~\cite{gargiulo2025tasksingularvectorsreducing}), by flattening the singular spectrum (Iso-Merging~\cite{marczak2025taskleftbehindisotropic}), or by splitting an ultra-low-rank principal component from a residual injected through its orthogonal complement (PRIME~\cite{lee2026prime}). These establish the singular basis as the right level at which to reason about interference, but fix the retained rank a priori and share it across layers---$d/n$ for TSV-Merging, roughly $1\%$ of the layer dimension for PRIME. Separately, ACE-Merging~\cite{xu2026acemergingdatafreemodelmerging} shows that each task's input covariance can be estimated from its fine-tuning update alone. Like TSV and Iso-C, it is a merge operator rather than a rank criterion and is orthogonal to the allocation question we study, i.e., our utility could be applied on top of it.

\noindent \textbf{LoRA Merging:}
Adapters require different treatment from full fine-tuning. DO-Merging~\cite{zheng2025decoupleorthogonalizedatafreeframework} attributes this to LoRA's much larger parameter-magnitude variance and merges magnitude and direction separately; LoRA-LEGO~\cite{zhao2024mergingloraslikeplaying} clusters rank-one components pooled across adapters into $k$ groups to assemble a rank-$k$ adapter; and Core Space~\cite{panariello2025corespace} merges within a common alignment basis, provably without information loss, which we use as one of the two spaces in which we allocate. Here too the retained rank is a single global constant.

\noindent \textbf{Rank selection and allocation.}
Closest to our work are methods that treat rank as something to be chosen. AdaRank~\cite{lee2026adarankadaptiverankpruning} observes, as we do, that dominant singular components are not necessarily the useful ones and that uniform truncation degrades performance, but prunes by learning a mask at test time through entropy minimization, requiring unlabelled calibration data and gradient descent. PRIME~\cite{lee2026prime} minimizes an interference--retention objective yet resolves it to one rank shared across all layers, and concurrently CtM~\cite{he2026compressmergemultipleloras} imposes the rank-$r$ bottleneck before rather than after merging. Net Utility is instead closed-form in the adapter weights, needing no data and no optimization loop, and selects no rank at all: because the scores are comparable across the entire pool, the rank profile falls out of a single top-$K$ over all modules, layers, and tasks. 

\noindent \textbf{Misc.:} Adjacent work targets merge-friendly training~\cite{zhang2025lori}, on-device continual merging under a budget on retained adapters~\cite{shenaj2026kmerge}, and sequential merging for continual learning~\cite{qiao2025mergeforgetsinglelora}, whereas we merge a fixed set of adapters offline under a total rank budget.

\section{Methodology: Net Utility and Rank Selection}
\label{sec:method}
% Preamble requirement: \newtheorem{proposition}{Proposition}
% This fragment owns \section{Method}; do not nest it under another Method section.

%\section{Method}
%\label{sec:method}

\begin{figure*}
    \centering
    \includegraphics[width=1\linewidth]{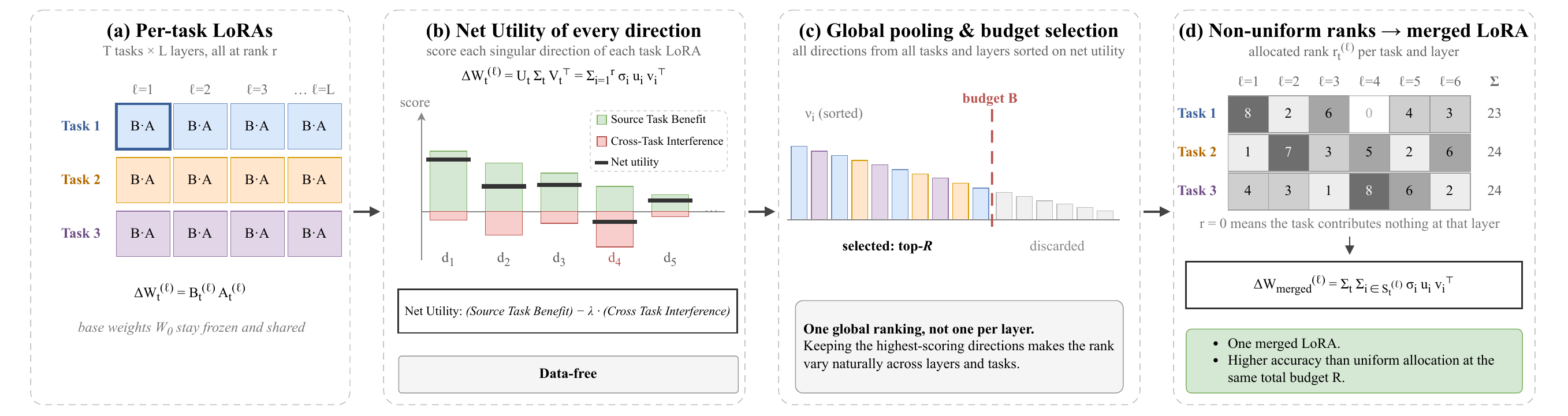}
    \caption{Net Utility based rank selection for Task Arithmetic Merging. (a): Obtain the per-task LoRAs. (b): Decompose all the LoRAs into their SVDs. Compute the Net Utility metric for each of the directions. (c): Pool all the directions and select the top-R directions. (d): Compose the merged LoRA using the selected directions. We naturally end up with variable number of directions being selected across tasks and layers.}
    \label{fig:overview}
\end{figure*}

In this section we introduce Net Utility based rank selection (see Fig.~\ref{fig:overview}). Net Utility decomposes each task adapter into rank-one singular components. At each adapted layer, it assigns every component a data-free score that balances source-task preservation against cross-task interference, then allocates a shared component budget across tasks before merging the retained components.

\subsection{Preservation Objective}
\label{sec:preservation-objective}

At merge time, only the $M$ task-specific LoRA updates $\{\Delta W_i\}_{i=1}^M$ and the base model are available; no task data, validation set, or forward pass is used.

Let $f_{\Delta W}$ denote the base model equipped with update $\Delta W$, and let $p_i$ be the input distribution of task $i$. We seek a merged update $\Delta W_{\mathrm m}$ whose model outputs match those of each individual adapter on its own inputs:
\begin{equation}
  \mathcal{L}_{\mathrm{pres}}
  =
  \sum_{i=1}^{M}
  \mathbb{E}_{x\sim p_i}
  \left[
    \left\lVert
      f_{\Delta W_i}(x)-f_{\Delta W_{\mathrm m}}(x)
    \right\rVert_2^2
  \right].
  \label{eq:preservation-objective}
\end{equation}

For $L$ adapted layers, let $h^l(x)$ be the input to layer $l$ under a shared reference model. Linearizing around this reference and assuming comparable sensitivity across layers results in the following surrogate (derived in Appendix~A.1):
\begin{equation}
  \begin{aligned}
    \widetilde{\mathcal{L}}_{\mathrm{pres}}
    &:=
    \sum_{l=1}^{L}\sum_{i=1}^{M}
    \mathbb{E}_{x\sim p_i}
    \left[
      \left\lVert
        \bigl(\Delta W_i^l-\Delta W_{\mathrm m}^l\bigr)h^l(x)
      \right\rVert_2^2
    \right] \\
    &=
    \sum_{l=1}^{L}\sum_{i=1}^{M}
    \left\lVert
      \Delta W_i^l-\Delta W_{\mathrm m}^l
    \right\rVert_{G_i^l}^2,
  \end{aligned}
  \label{eq:preservation-surrogate}
\end{equation}
where $G_i^l:=\mathbb{E}_{x\sim p_i}[h^l(x)h^l(x)^\top]$ and $\lVert X\rVert_G^2:=\operatorname{tr}(XGX^\top)$ is the associated squared weighted seminorm. The algorithm does not evaluate these activations, instead in the next section we approximate $G_i^l$ from the adapter weights.

\subsection{Data-Free Input Geometry}
\label{sec:data-free-geometry}

The matrix $G_i$ captures the second-order input statistics of task $i$, but it cannot be estimated without task data. Following \citet{xu2026acemergingdatafreemodelmerging}, we replace this unavailable geometry with a scaled input-side Gram proxy. For a compact SVD $\Delta W_i=U_i\Sigma_iV_i^\top$, the approximation is
\begin{equation}
  G_i
  \approx
  \kappa_i\Delta W_i^\top\Delta W_i
  =
  \kappa_iV_i\Sigma_i^2V_i^\top,
  \qquad \kappa_i>0.
  \label{eq:gram-geometry}
\end{equation}

The scale $\kappa_i$ need not be estimated because it cancels under the task-layer normalization introduced below. This Gram geometry satisfies $x^\top G_ix\approx\kappa_i\lVert\Delta W_i x\rVert_2^2$, so it gives greater weight to directions on which the adapter acts strongly. We use this geometry below and analyze one adapted layer, suppressing $l$ until the per-layer allocation rule.

\paragraph{Heuristic row-space geometry.}
As an alternative, we consider $\widehat G_i:=\kappa_iV_iV_i^\top$, which sets the singular-value weights to one on the adapter's row space and therefore weights all active right-singular directions equally. This construction is a heuristic rather than a consequence of the Gram approximation in Equation~\eqref{eq:gram-geometry}.

\subsection{Net-Utility Allocation}
\label{sec:net-utility-allocation}

Let $r_i=\operatorname{rank}(\Delta W_i)$. Its SVD expresses the update as $r_i$ rank-one singular components:
\begin{equation}
  D_{i,k}:=\sigma_{i,k}u_{i,k}v_{i,k}^{\top},
  \qquad
  \Delta W_i=\sum_{k=1}^{r_i}D_{i,k}.
  \label{eq:singular-components}
\end{equation}

The allocation retains or discards each component without rescaling it. With a binary selection variable $z_{i,k}$, the retained task update and additive merge are
\begin{equation}
  \begin{aligned}
    K_i&:=\sum_{k=1}^{r_i}z_{i,k}D_{i,k},
    &
    \Delta W_{\mathrm m}&:=\sum_{i=1}^{M}K_i,\\
    z_{i,k}&\in\{0,1\}.
  \end{aligned}
  \label{eq:additive-selection}
\end{equation}

At each layer, the number of selected components upper-bounds the rank of the additive merged update; its realized rank may be lower.

Retaining a component reduces the missing source-task update, but it may also introduce error on other tasks. The contribution of task $i$ to the layer-wise objective can be written as
\begin{equation}
  \begin{aligned}
    \ell_i
    &:=
    \left\lVert\Delta W_i-\Delta W_{\mathrm m}\right\rVert_{G_i}^2\\
    &=
    \left\lVert
      \bigl(\Delta W_i-K_i\bigr)-\sum_{j\ne i}K_j
    \right\rVert_{G_i}^2.
  \end{aligned}
  \label{eq:task-loss}
\end{equation}

The first term inside the seminorm is the missing source-task update, while the second collects components retained from other tasks. Squaring their difference introduces a signed cross-term, so the effect of any component depends on the complete selection.

\paragraph{Relative task-layer weighting.}
The absolute loss $\ell_i$ can be dominated by high-energy adapters \citep{marczak2025taskleftbehindisotropic}. We therefore use the relative loss $\widehat{\ell}_i:=\ell_i/\lVert\Delta W_i\rVert_{G_i}^2$ for every task-layer update with nonzero energy. Restoring the layer index, we minimize $\sum_l\sum_i\widehat{\ell}_i^l$. This normalization removes task-layer scale from the eventual utilities.

\paragraph{Separable upper bound.}
Equation~\eqref{eq:task-loss} couples the selected components, so they cannot be scored independently. Proposition~\ref{prop:separable-upper-bound} in Appendix~A.2 shows that, for any $\lambda>0$,
\begin{equation}
  \widehat{\ell}_i
  \le
  \left(1+\frac{M-1}{\lambda}\right)
  \frac{
    \left\lVert\Delta W_i-K_i\right\rVert_{G_i}^2
    +
    \lambda\sum_{j\ne i}\left\lVert K_j\right\rVert_{G_i}^2
  }{
    \left\lVert\Delta W_i\right\rVert_{G_i}^2
  }.
  \label{eq:separable-upper-bound}
\end{equation}

The first numerator term penalizes source-task components that are omitted, while the second penalizes components retained from other tasks. The bound removes the signed cross-term and separates the interference contributed by different tasks. The coefficient $\lambda$ controls the interference penalty. For fixed $\lambda$, the leading factor is positive and allocation-independent, so it can be omitted from the allocation objective.

For the Gram geometry, define the task scale $w_i:=\sum_{n=1}^{r_i}\sigma_{i,n}^4$, so that $\lVert\Delta W_i\rVert_{G_i}^2=\kappa_iw_i$. Proposition~\ref{prop:net-utility-decomposition} in Appendix~A.3 rewrites the resulting objective as a constant $M$ minus the total utility of the selected components. The utility of component $D_{j,k}$ is
\begin{equation}
    g_{j,k}
    :=
    \underbrace{
      \frac{\sigma_{j,k}^4}{w_j}
    }_{\text{source-task benefit $\tilde{\pi}$}}-
    \lambda
      \sigma_{j,k}^2
      \underbrace{\sum_{i\ne j}\sum_{n=1}^{r_i}
      \frac{\sigma_{i,n}^2}{w_i}
      \left(v_{j,k}^{\top}v_{i,n}\right)^2
    }_{\text{cross-task interference $ \tilde{C}$}}.
  \label{eq:net-utility}
\end{equation}

The first term is the normalized source-task benefit, the second is cross-task interference and vanishes for directions orthogonal to every other task's row space. The resulting utility is dimensionless, allowing scores to be compared across tasks within the layer.

\paragraph{Global budget.}
% Restoring the layer index, we pool the components from all tasks within each adapted layer $l$ under a budget $R$:

Restoring the layer index, we pool the components from all tasks and all adapted layers under a total budget $R_G = L\cdot R$, i.e.\ an average of $R$ per layer, which allows budget to move between layers $L$

\begin{equation}
  \begin{aligned}
    \underset{z^1,...,z^L}{\operatorname{maximize}}
    &\quad
    \sum_{l=1}^{L}\sum_{j=1}^{M}\sum_{k=1}^{r_j^l}
    z_{j,k}^l g_{j,k}^l\\
    \text{subject to}
    &\quad
    z_{j,k}^l\in\{0,1\},\\
    &
    \sum_{l=1}^{L}\sum_{j=1}^{M}\sum_{k=1}^{r_j^l}
    z_{j,k}^l\le R_G
  \end{aligned}
  \label{eq:per-layer-allocation}
\end{equation}

Because every component has a fixed score, Equation~\eqref{eq:per-layer-allocation} is maximized by retaining the largest positive scores across tasks, up to $R_G$. Capacity remains unused if fewer scores are positive. This selection is exact for the separable surrogate, not for the original model-level objective.

The derivation applies to the additive merge in Equation~\eqref{eq:additive-selection}. Applying the selected components through a nonlinear merge operator is an empirical extension without the same surrogate guarantee.

\subsection{Algorithm}
\label{sec:net-utility-algorithm}

Algorithm~\ref{alg:net-utility} summarizes the method for the Gram geometry $G_i$, using the global allocation in Equation~\eqref{eq:per-layer-allocation}. The corresponding variant for $\widehat G_i$ follows analogously by using the utility derived in Appendix~A.4.

\begin{algorithm}[t]
  \caption{Net-utility rank allocation for LoRA merging}
  \label{alg:net-utility}
  \begin{algorithmic}[1]
    \Require Task updates $\{\Delta W_i^l\}$, interference coefficient $\lambda$
    \Require Global component budget $R_G = L \cdot R$, merge operator $\operatorname{Merge}$
    \For{each adapted layer $l$}
      \State Compute the SVD of every $\Delta W_i^l$ using QR decomposition
      \State Compute $w_i^l=\sum_n(\sigma_{i,n}^l)^4$
      \State Form $V_j^{l\top}V_i^l$ and compute utilities using Equation~\eqref{eq:net-utility}
    \EndFor
    \State Select the top positive $R_G$ scores across all tasks and layers.
    \State Reconstruct each $K_i^l$ from its selected components
    \State \Return $\operatorname{Merge}(\{K_i^l\}_{i,l})$
  \end{algorithmic}
\end{algorithm}

The SVDs can be computed directly from the LoRA factors, and neither the dense updates nor $G_i^l$ need to be materialized. Appendix~A.5 gives the computational complexity.

\begin{table*}[t]
\centering
\small
\setlength{\tabcolsep}{4pt}
\begin{tabular}{lcl ccc ccc ccc}
\toprule
& & & \multicolumn{3}{c}{$R=64$} & \multicolumn{3}{c}{$R=32$} & \multicolumn{3}{c}{$R=16$} \\
\cmidrule(lr){4-6}\cmidrule(lr){7-9}\cmidrule(lr){10-12}
Space & Iso & Method & Unif. & Ours & $\Delta$ & Unif. & Ours & $\Delta$ & Unif. & Ours & $\Delta$ \\
\midrule
\multirow{1}{*}{Full} & \multirow{1}{*}{--}
  & TA              & 63.5 & 66.9 & $\mathbf{+3.4}$ & 63.1 & 65.9 & $\mathbf{+2.7}$ & 62.5 & 64.9 & $\mathbf{+2.3}$ \\
\midrule
\multirow{3}{*}{Full} & \multirow{3}{*}{--}
 & TSV             & 66.4 & 66.9 & $\mathbf{+0.4}$ & 65.4 & 65.7 & $\mathbf{+0.3}$ & 63.9 & 64.7 & $\mathbf{+0.8}$ \\
& & DARE            & 63.5 & 67.3 & $\mathbf{+3.8}$ & 63.1 & 66.1 & $\mathbf{+3.0}$ & 62.6 & 65.8 & $\mathbf{+3.2}$ \\
& & TIES            & 62.5 & 65.0 & $\mathbf{+2.5}$ & 62.3 & 64.8 & $\mathbf{+2.4}$ & 61.8 & 64.8 & $\mathbf{+3.0}$ \\
% & & CART            & 64.9 & 63.8 & $-1.1$ & 64.6 & 63.8 & $-0.9$ & 64.1 & 63.8 & $-0.3$ \\
\midrule
\multirow{3}{*}{Core} & \multirow{3}{*}{--}
 & TSV             & 66.4 & 66.8 & $\mathbf{+0.4}$ & 65.4 & 65.7 & $\mathbf{+0.3}$ & 63.9 & 64.6 & $\mathbf{+0.7}$ \\
& & DARE            & 63.8 & 67.0 & $\mathbf{+3.2}$ & 63.4 & 66.4 & $\mathbf{+3.0}$ & 62.8 & 65.7 & $\mathbf{+2.9}$ \\
& & TIES            & 62.8 & 65.1 & $\mathbf{+2.3}$ & 62.4 & 64.7 & $\mathbf{+2.3}$ & 62.2 & 64.2 & $\mathbf{+2.0}$ \\
% & & CART            & 64.8 & 63.7 & $-1.1$ & 64.6 & 63.8 & $-0.8$ & 64.1 & 63.8 & $-0.3$ \\
% & & CART ($\lambda{=}0$) & 64.8 & 64.7 & $-0.1$ & 64.6 & 64.3 & $-0.3$ & 64.1 & 63.9 & $-0.2$ \\
% & & CART (reord.)   & 64.9 & 64.8 & $-0.0$ & 64.6 & 64.0 & $-0.6$ & 64.1 & 63.8 & $-0.2$ \\
\midrule
\multirow{3}{*}{KnOTS} & \multirow{3}{*}{--}
  & TSV             & 66.4 & 66.9 & $\mathbf{+0.5}$ & 65.4 & 65.8 & $\mathbf{+0.4}$ & 63.9 & 64.6 & $\mathbf{+0.8}$ \\
& & DARE            & 63.6 & 66.9 & $\mathbf{+3.3}$ & 63.2 & 66.3 & $\mathbf{+3.0}$ & 62.6 & 64.9 & $\mathbf{+2.3}$ \\
& & TIES            & 62.9 & 65.3 & $\mathbf{+2.4}$ & 62.6 & 65.0 & $\mathbf{+2.4}$ & 62.2 & 64.3 & $\mathbf{+2.1}$ \\
% & & CART            & 64.9 & 63.7 & $-1.2$ & 64.6 & 63.8 & $-0.8$ & 64.1 & 63.8 & $-0.3$ \\
\bottomrule
\end{tabular}
\caption{Vision tasks, ViT-B/32, 7 tasks. Normalized accuracy averaged over all 7 tasks.}
\label{tab:main-v}
\end{table*}

\section{Experiments}

\begin{table*}[t]
\centering
\small
\setlength{\tabcolsep}{4pt}
\begin{tabular}{lcl ccc ccc ccc}
\toprule
& & & \multicolumn{3}{c}{$R=64$} & \multicolumn{3}{c}{$R=32$} & \multicolumn{3}{c}{$R=16$} \\
\cmidrule(lr){4-6}\cmidrule(lr){7-9}\cmidrule(lr){10-12}
Space & Iso & Method & Unif. & Ours & $\Delta$ & Unif. & Ours & $\Delta$ & Unif. & Ours & $\Delta$ \\
\midrule
\multirow{1}{*}{-} & \multirow{1}{*}{--}
  & TA    & 72.8 & 75.1 & $\mathbf{+2.3}$ & 72.8 & 75.0 & $\mathbf{+2.3}$ & 72.6 & 74.8 & $\mathbf{+2.2}$ \\
\midrule
\multirow{3}{*}{Full} & \multirow{3}{*}{--}
  & TSV   & 78.3 & 81.4 & $\mathbf{+3.1}$ & 77.5 & 80.6 & $\mathbf{+3.1}$ & 76.8 & 80.3 & $\mathbf{+3.5}$ \\
& & DARE  & 72.3 & 73.7 & $\mathbf{+1.4}$ & 72.5 & 73.3 & $\mathbf{+0.9}$ & 72.0 & 73.4 & $\mathbf{+1.3}$ \\
& & TIES & 71.9 & 73.6 & $\mathbf{+1.7}$ & 71.3 & 73.6 & $\mathbf{+2.3}$ & 70.2 & 73.5 & $\mathbf{+3.4}$ \\
\midrule
\multirow{3}{*}{Core} & \multirow{3}{*}{--}
  & TSV   & 78.3 & 81.4 & $\mathbf{+3.1}$ & 77.5 & 80.6 & $\mathbf{+3.1}$ & 76.8 & 80.3 & $\mathbf{+3.5}$ \\
& & DARE & 68.2 & 68.1 & $-0.1$ & 68.2 & 68.1 & $-0.1$ & 66.0 & 67.0 & $\mathbf{+1.1}$ \\
& & TIES & 74.4 & 77.5 & $\mathbf{+3.1}$ & 74.3 & 77.5 & $\mathbf{+3.2}$ & 74.6 & 77.5 & $\mathbf{+2.9}$ \\
\midrule
\multirow{3}{*}{KnOTS} & \multirow{3}{*}{--}
  & TSV   & 78.3 & 81.4 & $\mathbf{+3.1}$ & 77.5 & 80.6 & $\mathbf{+3.1}$ & 76.8 & 80.3 & $\mathbf{+3.5}$ \\
& & DARE  & 71.1 & 71.1 & $\mathbf{+0.1}$ & 70.9 & 70.5 & $-0.4$          & 72.8 & 72.5 & $-0.3$ \\
& & TIES & 73.3 & 75.7 & $\mathbf{+2.4}$ & 72.9 & 75.7 & $\mathbf{+2.8}$ & 72.4 & 75.7 & $\mathbf{+3.2}$ \\
\bottomrule
\end{tabular}
\caption{Language (NLI) tasks, Qwen3-4B, 6 tasks. Normalized accuracy averaged over
6 tasks.}
\label{tab:main-l}
\end{table*}

\begin{table*}[t]
\centering
\small
\setlength{\tabcolsep}{5pt}
\begin{tabular}{@{}ll ccc ccc@{}}
\toprule
& & \multicolumn{3}{c}{Vision (ViT-B/32, 7 tasks)}
& \multicolumn{3}{c}{Language (Qwen3-4B, 6 tasks)} \\
\cmidrule(lr){3-5}\cmidrule(lr){6-8}
Space & Method & Unif. & Ours & $\Delta$ & Unif. & Ours & $\Delta$ \\
\midrule
Full & Iso-C + TA  & 66.0 & 67.3 & $\mathbf{+1.3}$ & 81.6 & 82.6 & $\mathbf{+1.0}$ \\
Full & Iso-C + TSV & 66.0 & 67.2 & $\mathbf{+1.2}$ & 81.8 & 82.3 & $\mathbf{+0.5}$ \\
\bottomrule
\end{tabular}
\caption{Isotropization (Iso-C) at $R{=}16$, full space for both Vision and Language tasks.}
\label{tab:iso}
\end{table*}

\paragraph{Experimental details} 
We use the experimental setup of KnOTS \cite{stoica2024modelmergingsvdtie} and use the LoRA checkpoints for vision tasks provided\footnote{https://huggingface.co/collections/hoffman-lab/knots-model-merging-with-svd}. For the language tasks, we train LoRAs corresponding to the datasets below on the QWEN3-4B model (training details in Appendix~B). All LoRAs have rank 16 applied on the matrices of key projection, query projection, value projection, and output projection, across all attention layers. Following prior work like \cite{stoica2024modelmergingsvdtie, panariello2025corespace}, we report normalized accuracy as a ratio of the accuracy of the merged LoRA on a given task to the accuracy of the original LoRA on this task. We implement on top of the codebase from \cite{panariello2025corespace}\footnote{https://github.com/apanariello4/core-space-merging}. All experiments are run on NVIDIA H100 GPUs.

\begin{enumerate}
    \item \textbf{Vision tasks}: For the vision experiments, we use CLIP ViT-B/32 \cite{dosovitskiy2021imageworth16x16words} as vision encoders fine-tuned on a standard set of 7 tasks DTD, EuroSAT, GTSRB, MNIST, RESISC, SUN397, SVHN.
    \item \textbf{Language tasks}: For language experiments, we use Qwen 3-4B \cite{yang2025qwen3technicalreport} fine-tuned on 6 NLI tasks SNLI, MNLI, SICK, QNLI, RTE, SCITAIL. 
\end{enumerate}

\paragraph{Baselines}
We use multiple merging methods as baselines. 
\begin{enumerate}
    \item Task Arithmetic (TA) \cite{ilharco2023taskarithmetic} performs a scaled summation of each task matrix. 
    \item TIES \cite{yadav2023tiesmergingresolvinginterferencemerging} trims low-magnitude parameters and averages parameters with majority sign. 
    \item DARE \cite{yu2024languagemodelssupermario} preprocesses task vectors by randomly dropping a fraction of parameters and rescaling to the mean. 
    \item TSV \cite{gargiulo2025tasksingularvectorsreducing} concatenates low-rank approximations of task matrices and orthogonalizes them across tasks. 
    \item Iso-C \cite{marczak2025taskleftbehindisotropic} flattens the spectrum of singular values for a model merged with task arithmetic. 
\end{enumerate}
We apply Net Utility on all the methods discussed above. However, for TIES and DARE, the budgeting is done locally, i.e., each layer gets an equal budget. This is because TIES and DARE apply sample specific operations. DARE drops elements randomly for each sample and TIES performs the sign-resolution at merge time and depends on the searched pruning rate or $\mathrm{top}K$.

We also apply our net utility allocation on the methods discuss above in three merge spaces: the full weight space, core space \cite{panariello2025corespace}, and KnOTS space \cite{stoica2024modelmergingsvdtie}. Each method is run at three budgets $R \in \{64, 32, 16\}$, where $R$ is the number of singular directions retained per layer across all tasks. Full capacity is $T\!\cdot\!r$ where $T$ is the number of tasks, i.e.\ $112$ for vision and $96$ for language, so all three budgets we evaluate are less than the full budget. 
% The baseline \emph{Unif.} splits the budget evenly across tasks and keeps each task's top singular directions, which is what TSV and KnOTS do. 

\paragraph{Hyperparameters}
In \eqref{eq:gram-geometry}, we have a choice of $\widehat G_i:=\kappa_iV_iV_i^\top$ and $\widehat G_i:=\kappa_iV_i \Sigma^2_i V_i^\top$. We can write this as $\widehat G_i:=\kappa_iV_i \Sigma^{(2\alpha)}_i V_i^\top$, where $\alpha \in \{0,1\}$. The exponent $\alpha$ amplifies the magnitude of the task update. When the task update magnitudes are roughly similar, this is not an issue. But when the task energies are very different it can end up focusing on only the tasks with the highest. To test this, we use variance of logarithms, a measure for heterogeneity \cite{foster1999lorenz}.
\begin{equation}
h \;:=\; \operatorname{Var}_i\!\big[\log\lVert\Delta W_i\rVert_F^{2}\big],
\label{eq:h}
\end{equation}
We find $h=1.83$ across the seven vision adapters and $h=0.63$ across the six NLI adapters. The vision tasks are more heterogenous and we set $\alpha=0$ to ensure that tasks with large magnitudes do not dominate the merged LoRA.

We use a data free approach to assigning $\lambda=\mathbf{M}(\tilde{\pi})/\mathbf{M}(\sigma^{2}\tilde{C})$ in \eqref{eq:net-utility} and $\mathbf{M}$ is the median.

\begin{table*}[t]
  \centering
  \begin{tabular}{@{}lcc c cccc@{}}
    \toprule
    & \multicolumn{3}{c}{spend the whole budget}
    & \multicolumn{4}{c}{admit only $g_{j,l}>0$} \\
    \cmidrule(lr){2-4}\cmidrule(lr){5-8}
    $R$ & Uniform & Ours & $\Delta$ & Ours & used (avg) & unspent & $\Delta$ \\
    \midrule
    $112$ (full budget) & $63.75$ & $63.75$ & $+0.00$
                 & $\mathbf{66.94}$ & $61.5$ & $\mathbf{45.0\%}$ & $\mathbf{+3.19}$ \\
    $64$         & $63.54$ & $\mathbf{66.94}$  & $\mathbf{+3.40}$
                 & $66.94$          & $61.5$ & $3.9\%$           & $+3.40$ \\
    $32$         & $63.15$ & $\mathbf{65.87}$  & $\mathbf{+2.72}$
                 & $65.87$          & $32.0$ & $0\%$             & $+2.72$ \\
    $16$         & $62.52$ & $\mathbf{64.86}$  & $\mathbf{+2.34}$
                 & $64.86$          & $16.0$ & $0\%$             & $+2.34$ \\
    \bottomrule
  \end{tabular}
  \caption{The budget need not be spent. Net-utility allocation on task  arithmetic on vision tasks. \emph{used} is directions retained per module. At full rank $R = 112$ only $55\%$ carry positive net utility, discarding the rest raises accuracy by $3.19$ while leaving over half the budget unspent. At lower ranks, we observe that entire budget is spent.}
  \label{tab:tv-global}
\end{table*}

\begin{figure*}[t]
\centering
\includegraphics[width=0.8\textwidth]{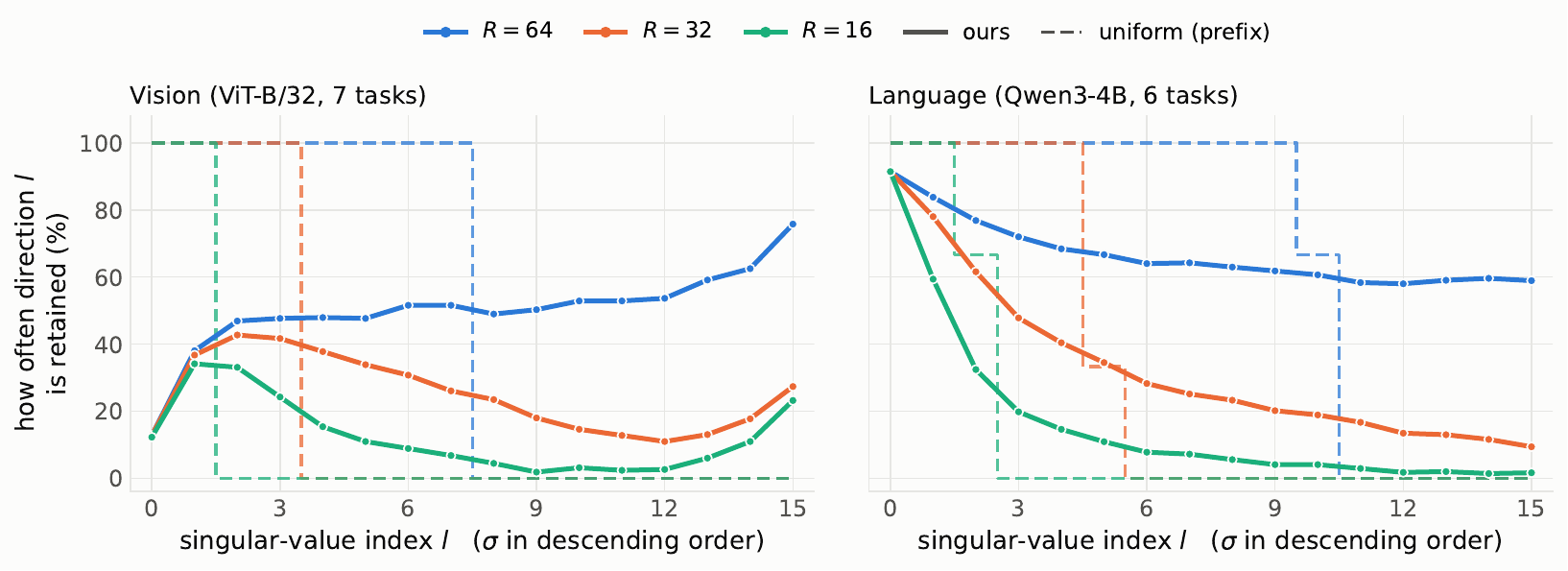}
\caption{Fraction of sets that retain the $l$-th singular direction. A prefix rule keeps every direction above a cut and none below it, i.e.\ the dashed step. On the other hand the net utility produces nothing of the sort.}
\label{fig:prefix}
\end{figure*}

\section{Results}

\paragraph{Vision.}
From Table~\ref{tab:main-v}, we observe that our rank allocation outperforms uniform
allocation across all three spaces and all four
merging methods. We achieve a maximum gain of $\mathbf{+3.8}$ over DARE in the full space with a budget of rank $64$. The gains are largest for DARE that improves by $+2.3$ to $+3.8$ and TA by $+2.3$ to $+3.4$, while TSV
improves to a lesser extent by $+0.3$ to $+0.8$ across the rank budgets. This could be because TSV whitens and re-ranks the spectrum, hence uniform allocation is a strong baseline there.

\paragraph{Language.}
We observe similar trends in Table~\ref{tab:main-l}. Our allocation improves over uniform in most of the methods across all the spaces, with a maximum gain of $\mathbf{+3.5}$ on TSV at budget $16$.
Our method's gains over TSV range from $+3.1$ to $+3.5$ in all three spaces, TIES range from $+1.7$ to $+3.4$, and gains over TA range from $+2.2$ to $+2.3$. The exception is DARE in core and KnOTS space, where we are within $0.4$ of uniform and slightly below it at budgets $64$ and $32$. Unlike vision, TSV is the method that benefits most here. We also observe that the gain tends to grow as the budget shrinks i.e., TSV improves from $+3.1$ at $R{=}64$ to $+3.5$ at $R{=}16$, and TIES in full space from $+1.7$ to $+3.4$, which is the expected behaviour, since at a tight budget the choice of which directions to keep matters more. The reason why TSV excels here could be attributed to the relative homogeneity of the language tasks indicating that rank allocation is more important in that regime.

We also ran experiments with the Iso-C \cite{marczak2025taskleftbehindisotropic}, which
replaces the retained singular values by their mean. Table~\ref{tab:iso} reports
these at budget $16$ for both domains. Our allocation improves over uniform in all
four settings: $+1.3$ and $+1.2$ on vision, $+1.0$ and $+0.5$ on language.

Our net utility metrics although derived based task arithmetic, i.e.\ $\sum_i \Delta W_i(\mathcal{S}_i)$ of the retained parts transfer well to others. Note, TSV, DARE and TIES are not linear in the task updates i.e., they apply truncation, random masking, and trimming with sign election respectively so the decomposition is not exact for them. However, using the net utility derived for task arithmetic on these methods improve their performances as shown in the Tables \ref{tab:main-v} and \ref{tab:main-l}. This shows that net utility is transferable across different merge methods and different spaces.

\section{Analysis}

\subsection{The budget need not be spent}
Because $g_{j,l}$ can be negative, admitting a direction whose interference exceeds its relative self-gain is strictly worse than an empty slot. In this experiment, we show that our method drops singular directions whose $g_{j,l} < 0$ on task arithmetic merging on vision tasks at higher budgets. Classical rank allocation always use the entire $R$. In Table \ref{tab:tv-global} we show that at budget of 112, our algorithm retains only $61.5$ directions on average per layer, even though the budget allows 112 per layer. That is only 48$\%$ of the budget is spent, resulting in 3.19 $\%$ more accuracy than using all the budget. At budget of $64$, the number of ranks used is similar and again it does not use the entire budget. At lower budgets, it uses all available budget as the number of directions with $g_{j,l} > 0$ exceeds the number of available slots.

\subsection{The optimal set need not be a prefix}
\label{rem:prefix}
We observe that net-utility $g_{j,l}$ need not be monotone in $l$ i.e., although $\tilde{\pi}_{j,l}$ decreases with $l$, the interference $\sigma_{j,l}^2\tilde{C}_{j,l}$ is governed by
cross-task geometry, so a low-$\sigma$ direction orthogonal to the other tasks can outrank a dominant but heavily shared singular value. Hence the optimal set $\mathcal{S}_j^\star$ is in general a subset, and hence a prefix truncation used by all existing SVD-based merging methods is suboptimal. On vision, $88\%$ of the choosen sets are not prefixes at every budget. On language, $64.5\%$ of the sets are non-prefix at $R{=}64$. Figure~\ref{fig:prefix} shows that on vision tasks, the relation is close to inverted i.e., at $R{=}64$ the strongest direction of a task is retained only $12.8\%$ of the time while the weakest is retained $75.8\%$ of the time. On language the curve decreases with $l$ but still retains $59\%$ of the directions at $l=15$, where a prefix rule at the same budget retains none.

% \avinash{Add experimental evidence}

\begin{figure}[t]
  \centering
\includegraphics[clip, width=0.48\textwidth]{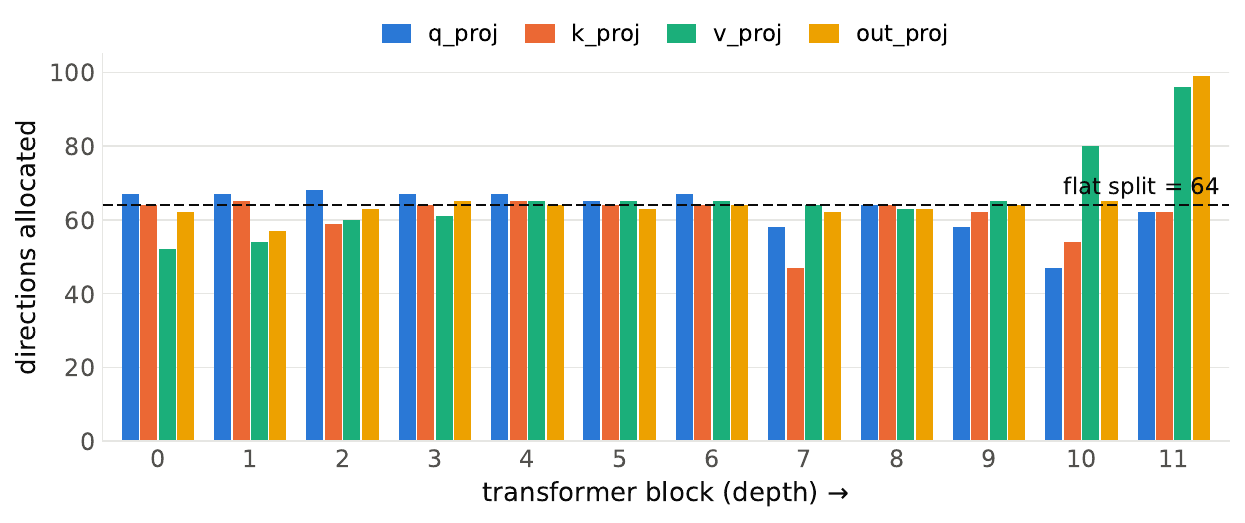}
  \caption{Total Budget allocation across all 7 vision tasks v/s layers. We observe that later layers require more budget than early layers. We also observe that modules v-proj and out-proj occupy more budget in the later layers. }
  \label{fig:Co3}
  \vspace{-5mm}
\end{figure}

\subsection{Not all Layers/Modules are equally important}

In Fig.~\ref{fig:Co3} we show the allocated ranks to each module \& layer combination for the ViT-B/32 model on the seven vision tasks with a budget of $64$. We find that different modules and layers deviate pretty heavily from the uniform rank. Specifically, in the later layers the value projection and output projection layers require more ranks. These modules also appear to be the most important matrices where the budget allocation plays a major role. The figure corresponds to Task Arithmetic used as the merging method over full space.

\subsection{Analyzing the net utility}
In the previous sections we show that the allocation induced by $g_{j,l}$ improves merging, but not what $g_{j,l}$ measures about a task. In this section we conduct an experiment to interpret what the net utility measures.

This requires a more controlled experiment and therefore we use
Hendrycks MATH \cite{hendrycks2021measuringmathematicalproblemsolving}, which splits a single MATH task into seven topics (algebra, counting and probability, geometry, intermediate algebra, number theory, prealgebra, precalculus) and we train one LoRA adapter per topic on Qwen3 with identical training setup (rank $32$), so the seven adapters differ only in the topic they were trained on.

We measure the \emph{relative} accuracy, i.e.\ the adapter's accuracy minus the base model's accuracy on the same topic, which controls for what the base model already knew. A higher relative accuracy means an easier
topic. We can also observe similar trends in the Figure~\ref{fig:analysis} where intermediate algebra has lower relative accuracy than prealgebra and algebra. Similarly precalculus is one of the harder subjects.

We also observe in Figure~\ref{fig:analysis} that the relative accuracy is inversely correlated with the mean net utility, with a correlation of $-0.78$. Since a higher relative accuracy means an easier
topic, this says that harder topics have \emph{higher} net utility. This is consistent with how $g_{j,l}$ is defined. Harder topics appear to require directions that the other topics do not use, while easier topics are largely covered by what the rest of the set already provides. Thus we interpret that net-utility could be associated to the task difficulty. 

\begin{figure}[t]
    \centering
    \includegraphics[width=0.4\textwidth]{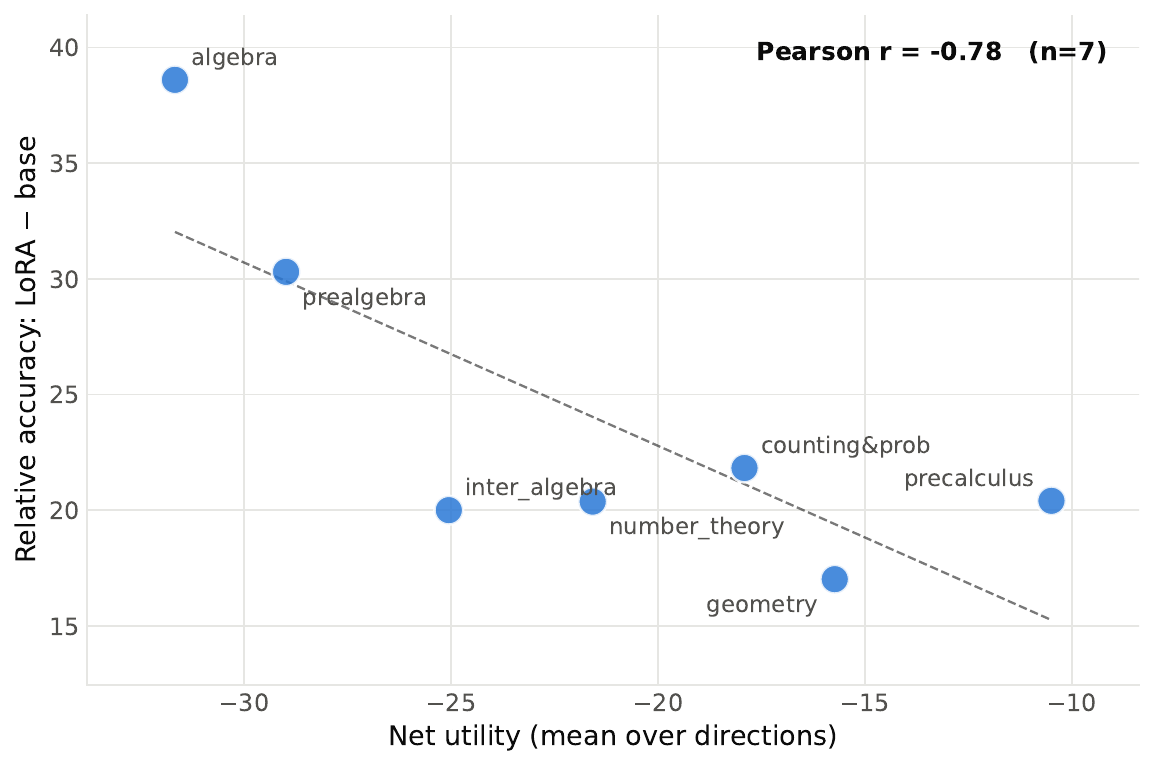}
    \caption{Net-utility vs performance on Hendrycks Math dataset. We observe that there is a negative correlation to the relative task performance and net utility}
    \label{fig:analysis}
    \vspace{-5.5mm}
\end{figure}

\section{Conclusion}
In this work, we show that uniform rank allocation across tasks and layers is not optimal and we introduce a novel metric called \textbf{Net Utility} metrics to solve this budget allocation greedily. For each task LoRA's singular direction the metric scores the singular direction's task benefit against the cross task interference. Using this metric we globally pool the singular directions across layers and tasks, and assign directions which have positive net utility. The chosen directions are then merged to form the merged LoRA. We apply this method over five merging methods across three merging spaces. The merging is done over two sets, one of seven vision tasks and another over six language tasks. That on average across all merging methods and spaces, vision tasks see an average improvement of +2.1\% and language tasks see a +2.2\% improvement. We also show interesting insight that net utility is correlated with task difficulty and could be interpreted as such.

\section{Limitations}
Our work is not without its limitations. At first, we restrict the net utility derivation to a linear merging, however one can derive the net utility to a specific merging algorithm that could further boost its performance. Secondly, we do not tune the hyperparameter $\lambda$, however, if given access to a validation set, tuning $\lambda$ would improve the performance. 

% {\let\thefootnote\relax\footnote{GenAI tools were used for help with writing and polishing the grammar of the paper.}}

\clearpage

\clearpage

% \section*{Acknowledgments}

% GenAI tools were used for only polishing the grammar of the paper.

% \bigskip
% \noindent Thank you for reading these instructions carefully. We look forward to receiving your electronic files!

\bibliography{aaai2027}

% Check whether the conference requires a reproducibility checklist to be included in the paper.
% If so, you can uncomment the following line and ajust the path to include it.
\clearpage

\clearpage
\appendix
% Preamble requirement: \newtheorem{proposition}{Proposition}

% \section{Technical Appendix}
% \label{sec:technical-appendix}

\setcounter{equation}{0}
\renewcommand{\theequation}{A.\arabic{equation}}

\section{Appendix}

\subsection{A.1 Layer-Wise Preservation Surrogate}
\label{app:layer-wise-preservation}

Let $J^l(x)$ be the output Jacobian with respect to layer $l$'s pre-activation under the shared reference model. A first-order expansion gives
\begin{equation}
  f_{\Delta W_i}(x)-f_{\Delta W_{\mathrm m}}(x)
  \approx
  \sum_{l=1}^{L}
  J^l(x)
  \bigl(\Delta W_i^l-\Delta W_{\mathrm m}^l\bigr)
  h^l(x).
  \label{eq:appendix-taylor-expansion}
\end{equation}

The sum contains cross-layer interactions. Cauchy--Schwarz and submultiplicativity give
\begin{equation}
  \begin{aligned}
    &\left\lVert
      \sum_{l=1}^{L}
      J^l(x)
      \bigl(\Delta W_i^l-\Delta W_{\mathrm m}^l\bigr)h^l(x)
    \right\rVert_2^2\\
    &\quad\le
    L\sum_{l=1}^{L}
    \left\lVert J^l(x)\right\rVert_{\mathrm{op}}^2
    \left\lVert
      \bigl(\Delta W_i^l-\Delta W_{\mathrm m}^l\bigr)h^l(x)
    \right\rVert_2^2.
  \end{aligned}
  \label{eq:appendix-cross-layer-bound}
\end{equation}

Summing over tasks and taking expectations preserves the bound. If the Jacobian norms are approximately constant over the adapted layers and inputs of interest, their common scale does not affect the minimizer, motivating the surrogate in Equation~\eqref{eq:preservation-surrogate}.

\subsection{A.2 Separable Upper Bound}
\label{app:separable-upper-bound}

\begin{proposition}[Separable upper bound]
  \label{prop:separable-upper-bound}
  If either $G_i$ or $\widehat G_i$ is used consistently and the corresponding normalization energy is nonzero, Equation~\eqref{eq:separable-upper-bound} holds for every $\lambda>0$.
\end{proposition}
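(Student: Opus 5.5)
We need to bound $\|A - \sum_{j\ne i} K_j\|_G^2$, where $A=\Delta W_i-K_i$, by $(1+(M-1)/\lambda)\bigl(\|A\|^2+\lambda\sum_{j\ne i}\|K_j\|^2\bigr)$, and then divide by the normalization energy. The plan is to work entirely within the seminorm's inner product and use only two elementary inequalities: a Young-type bound to split off the cross-term, and Cauchy--Schwarz to split the sum over the other tasks.

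\textbf{Setup.} Fix task $i$ and let $G$ denote whichever geometry ($G_i$ or $\widehat G_i$) is used. Either choice is a positive semidefinite matrix times $\kappa_i>0$. Hence $\langle X,Y\rangle_G:=\operatorname{tr}(XGY^\top)$ is a symmetric positive semidefinite bilinear form. It satisfies the Cauchy--Schwarz inequality, and $\lVert\cdot\rVert_G$ obeys the triangle inequality, even though it is only a seminorm. Write $A:=\Delta W_i-K_i$ and $B:=\sum_{j\ne i}K_j$, so that $\ell_i=\lVert A-B\rVert_G^2$ by Equation~\eqref{eq:task-loss}.

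\textbf{Step 1: remove the cross-term.} For any $t>0$, Young's inequality applied to the cross-term gives
\[
\lVert A-B\rVert_G^2\le(1+t)\lVert A\rVert_G^2+(1+1/t)\lVert B\rVert_G^2 .
\]
This follows from $2|\langle A,B\rangle_G|\le t\lVert A\rVert_G^2+t^{-1}\lVert B\rVert_G^2$, itself a consequence of Cauchy--Schwarz and AM--GM.

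\textbf{Step 2: separate the other tasks.} The triangle inequality followed by Cauchy--Schwarz over the $M-1$ summands yields
\[
\lVert B\rVert_G^2\le (M-1)\sum_{j\ne i}\lVert K_j\rVert_G^2 .
\]

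\textbf{Step 3: choose $t$ to match the stated constant.} We need
\[
(1+t)\le c \quad\text{and}\quad (1+1/t)(M-1)\le c\lambda, \qquad c=1+\frac{M-1}{\lambda}.
\]
Take $t=(M-1)/\lambda$. Then $1+t=c$ exactly. For the second condition,
\[
(1+\lambda/(M-1))(M-1)=M-1+\lambda ,
\]
while $c\lambda=\lambda+M-1$, so it also holds with equality. The case $M=1$ is trivial: $B=0$ and $c=1$, so no choice of $t$ is needed. Finally, divide both sides by $\lVert\Delta W_i\rVert_G^2>0$; this is the nonzero-energy hypothesis. The scale $\kappa_i$ appears in both numerator and denominator and cancels, so the same argument applies verbatim to $\widehat G_i$.

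\textbf{Main obstacle.} The only delicate point is Step 3: the constant must come out exactly as $1+(M-1)/\lambda$ simultaneously for both terms. The choice $t=(M-1)/\lambda$ achieves this, so I expect no real difficulty there. The remaining care is in stating that Cauchy--Schwarz holds for a merely semidefinite form, which is standard.
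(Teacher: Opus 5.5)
Your proposal is correct and follows the paper's proof essentially step for step. Both use Young's inequality in the weighted seminorm to remove the cross-term, Cauchy--Schwarz to get $\bigl\lVert\sum_{j\ne i}K_j\bigr\rVert_{G_i}^2\le(M-1)\sum_{j\ne i}\lVert K_j\rVert_{G_i}^2$, the choice $a=(M-1)/\lambda$, normalization by the nonzero energy, and a separate (equality) treatment of $M=1$, with your explicit check that both coefficients equal $1+(M-1)/\lambda$ spelling out what the paper leaves implicit.
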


\begin{proof}
  Write $G_i$ for the chosen geometry. For $M\ge2$, Young's inequality in the $G_i$-weighted seminorm gives, for any $a>0$,
  \begin{equation}
    \begin{aligned}
      &\left\lVert
        \bigl(\Delta W_i-K_i\bigr)-\sum_{j\ne i}K_j
      \right\rVert_{G_i}^2\\
      &\quad\le
      (1+a)\left\lVert\Delta W_i-K_i\right\rVert_{G_i}^2\\
      &\qquad+
      (1+a^{-1})
      \left\lVert\sum_{j\ne i}K_j\right\rVert_{G_i}^2.
    \end{aligned}
    \label{eq:appendix-young-bound}
  \end{equation}
  Cauchy--Schwarz gives
  \[
    \left\lVert\sum_{j\ne i}K_j\right\rVert_{G_i}^2
    \le
    (M-1)\sum_{j\ne i}\left\lVert K_j\right\rVert_{G_i}^2.
  \]
  Setting $a=(M-1)/\lambda$ and normalizing by $\lVert\Delta W_i\rVert_{G_i}^2$ gives Equation~\eqref{eq:separable-upper-bound}. For $M=1$, Equation~\eqref{eq:separable-upper-bound} holds with equality.
\end{proof}

\subsection{A.3 Net-Utility Decomposition}
\label{app:net-utility-decomposition}

\begin{proposition}[Net-utility decomposition]
  \label{prop:net-utility-decomposition}
  Assume $w_i>0$ for every indexed task; zero-energy updates contain no candidates and are omitted. Under the Gram geometry in Equation~\eqref{eq:gram-geometry}, each selected component has the fixed utility $g_{j,k}$ defined in Equation~\eqref{eq:net-utility}. Moreover, the summed normalized source-task residual and cross-task interference terms inside the right-hand side of Equation~\eqref{eq:separable-upper-bound} equal $M-\sum_{j,k}z_{j,k}g_{j,k}$.
\end{proposition}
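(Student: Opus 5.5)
The plan is a direct computation in the Gram geometry of Equation~\eqref{eq:gram-geometry}, $G_i=\kappa_iV_i\Sigma_i^2V_i^\top$. I will expand the two numerator terms of Equation~\eqref{eq:separable-upper-bound} separately, normalize each by $\lVert\Delta W_i\rVert_{G_i}^2$, sum over $i$, and then exchange the order of summation. The aim is to show that every selected component $D_{j,k}$ contributes exactly $-z_{j,k}g_{j,k}$. The facts I will use throughout are orthonormality of the compact-SVD factors, $U_i^\top U_i=I$ and $V_i^\top V_i=I$, and the identity $\lVert X\rVert_G^2=\operatorname{tr}(XGX^\top)$. As a first check, taking $X=\Delta W_i$ gives $\lVert\Delta W_i\rVert_{G_i}^2=\kappa_i\operatorname{tr}(\Sigma_i^4)=\kappa_iw_i$.

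\emph{Source-task residual.} Write $\Delta W_i-K_i=U_i\,\mathrm{diag}(1-z_{i,k})\,\Sigma_iV_i^\top$. Substituting into the trace, the inner factor $V_i^\top V_i$ collapses to the identity, and $U_i$ then disappears under the trace. This leaves
\[
\lVert\Delta W_i-K_i\rVert_{G_i}^2=\kappa_i\sum_k(1-z_{i,k})^2\sigma_{i,k}^4 .
\]
Because $z_{i,k}$ is binary, $(1-z_{i,k})^2=1-z_{i,k}$. After dividing by $\kappa_iw_i$ the term becomes $1-\sum_k z_{i,k}\sigma_{i,k}^4/w_i$. This is where $\kappa_i$ cancels, and summing over $i$ produces the constant $M$.

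\emph{Cross-task term.} For $j\ne i$, expand the weight as $G_i=\kappa_i\sum_n\sigma_{i,n}^2v_{i,n}v_{i,n}^\top$. This gives
\[
\lVert K_j\rVert_{G_i}^2=\kappa_i\sum_n\sigma_{i,n}^2\lVert K_jv_{i,n}\rVert_2^2 .
\]
Next, $K_jv_{i,n}=\sum_k z_{j,k}\sigma_{j,k}(v_{j,k}^\top v_{i,n})\,u_{j,k}$. Since the $u_{j,k}$ are orthonormal, the squared norm is diagonal:
\[
\lVert K_jv_{i,n}\rVert_2^2=\sum_k z_{j,k}\sigma_{j,k}^2(v_{j,k}^\top v_{i,n})^2 ,
\]
where $z_{j,k}^2=z_{j,k}$ is used again. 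Dividing by $\kappa_iw_i$, multiplying by $\lambda$, and summing over $i$ and $j\ne i$ yields a triple sum. I then swap the order so that the outer sum runs over the component $(j,k)$ and the inner sum over $i\ne j$ and $n$. Each component is then charged exactly $\lambda z_{j,k}\sigma_{j,k}^2\tilde C_{j,k}$. Combining this with the residual part gives $M-\sum_{j,k}z_{j,k}g_{j,k}$, with $g_{j,k}$ as in Equation~\eqref{eq:net-utility}. Because $g_{j,k}$ depends only on the SVDs and not on $z$, each component has a fixed utility.

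The main obstacle is the cross-task step, since this is where cross terms between distinct components of $K_j$ could appear. The argument has to route the computation through the left singular vectors $u_{j,k}$, which are orthonormal, rather than through the right singular vectors, which are not orthogonal across tasks. It also relies on writing $G_i$ as a sum of rank-one terms, so that $\kappa_i$ and $w_i$ factor out cleanly. The omission of zero-energy tasks only guarantees that $w_i>0$, so the normalization is well defined.
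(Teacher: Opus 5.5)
Your proof is correct and is essentially the paper's argument. The paper packages your key step as $G_i$-orthogonality of components from the same task, $\langle D_{j,k},D_{j,m}\rangle_{G_i}=0$ for $k\neq m$ because $u_{j,k}^\top u_{j,m}=0$, so weighted energies add; it then reads off $\lVert D_{i,k}\rVert_{G_i}^2$, $\lVert D_{j,k}\rVert_{G_i}^2$ and $\lVert\Delta W_i\rVert_{G_i}^2=\kappa_i w_i$ and sums exactly as you do, while your explicit trace computation and use of $z^2=z$ spell out steps the paper leaves implicit.
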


\begin{proof}
  Define the $G$-weighted bilinear form as $\langle X,Y\rangle_G:=\operatorname{tr}(XGY^\top)$. Components from the same source task are orthogonal under this form:
  \begin{equation}
    \begin{aligned}
      \left\langle D_{j,k},D_{j,m}\right\rangle_{G_i}
      &=
      \sigma_{j,k}\sigma_{j,m}
      \left(u_{j,k}^{\top}u_{j,m}\right)\\
      &\quad{}\cdot
      \left(v_{j,k}^{\top}G_iv_{j,m}\right)
      =0,
      \qquad k\ne m.
    \end{aligned}
    \label{eq:appendix-component-orthogonality}
  \end{equation}
  Consequently, their weighted energies add. Equation~\eqref{eq:gram-geometry} gives
  \begin{equation}
    \begin{aligned}
      \left\lVert D_{i,k}\right\rVert_{G_i}^2
      &=\kappa_i\sigma_{i,k}^4,\\
      \left\lVert D_{j,k}\right\rVert_{G_i}^2
      &=
      \kappa_i\sigma_{j,k}^2
      \sum_n\sigma_{i,n}^2
      \left(v_{j,k}^{\top}v_{i,n}\right)^2,
      \qquad j\ne i,\\
      \left\lVert\Delta W_i\right\rVert_{G_i}^2
      &=\kappa_i\sum_n\sigma_{i,n}^4
      =\kappa_iw_i.
    \end{aligned}
    \label{eq:appendix-gram-energies}
  \end{equation}
  The normalization therefore cancels $\kappa_i$. Summing the normalized source-task residual and cross-task interference terms in Equation~\eqref{eq:separable-upper-bound} gives
  \begin{equation}
    \begin{aligned}
      &\sum_{i=1}^{M}
      \frac{
        \left\lVert\Delta W_i-K_i\right\rVert_{G_i}^2
        +
        \lambda\sum_{j\ne i}\left\lVert K_j\right\rVert_{G_i}^2
      }{
        \left\lVert\Delta W_i\right\rVert_{G_i}^2
      }\\
      &=
      \sum_{i=1}^{M}\sum_{k=1}^{r_i}
      (1-z_{i,k})
      \frac{\sigma_{i,k}^4}
      {\sum_n\sigma_{i,n}^4}\\
      &\quad+
      \lambda\sum_{j=1}^{M}\sum_{k=1}^{r_j}
      z_{j,k}\sigma_{j,k}^2
      \sum_{i\ne j}\sum_{n=1}^{r_i}
      \frac{\sigma_{i,n}^2}
      {\sum_m\sigma_{i,m}^4}
      \left(v_{j,k}^{\top}v_{i,n}\right)^2\\
      &=
      M-\sum_{j=1}^{M}\sum_{k=1}^{r_j}z_{j,k}g_{j,k}.
    \end{aligned}
    \label{eq:appendix-net-utility-decomposition}
  \end{equation}
  This proves the claimed decomposition.
\end{proof}

\subsection{A.4 Heuristic Row-Space Utility}
\label{app:heuristic-row-space-utility}

The utility for $\widehat G_i=\kappa_iV_iV_i^\top$ follows from the same bilinear-form orthogonality argument as Proposition~\ref{prop:net-utility-decomposition}. In particular,
\begin{equation}
  \begin{aligned}
    \left\lVert D_{i,k}\right\rVert_{\widehat G_i}^2
    &=\kappa_i\sigma_{i,k}^2,\\
    \left\lVert D_{j,k}\right\rVert_{\widehat G_i}^2
    &=
    \kappa_i\sigma_{j,k}^2
    \sum_n
    \left(v_{j,k}^{\top}v_{i,n}\right)^2,
    \qquad j\ne i,\\
    \left\lVert\Delta W_i\right\rVert_{\widehat G_i}^2
    &=\kappa_i\sum_n\sigma_{i,n}^2
    =\kappa_i\widehat w_i.
  \end{aligned}
  \label{eq:appendix-row-space-energies}
\end{equation}

Thus, define the heuristic task scale and utility as
\begin{equation}
  \begin{aligned}
    \widehat w_i
    &:=
    \sum_{n=1}^{r_i}\sigma_{i,n}^2,\\
    \widehat g_{j,k}
    &:=
    \underbrace{\frac{\sigma_{j,k}^2}{\widehat w_j}}_{\text{source-task benefit $\tilde{\pi}$}}
    -
    \lambda\sigma_{j,k}^2
     \underbrace{\sum_{i\ne j}\sum_{n=1}^{r_i}
    \frac{
      \left(v_{j,k}^{\top}v_{i,n}\right)^2
    }{
      \widehat w_i
    }}_{\text {cross-task interference $\tilde{C}$ }}.
  \end{aligned}
  \label{eq:appendix-row-space-utility}
\end{equation}

Substituting Equation~\eqref{eq:appendix-row-space-energies} into the summed normalized terms inside the right-hand side of Equation~\eqref{eq:separable-upper-bound} cancels $\kappa_i$ and gives
\begin{equation}
  M-\sum_{j=1}^{M}\sum_{k=1}^{r_j}z_{j,k}\widehat g_{j,k}.
  \label{eq:appendix-row-space-decomposition}
\end{equation}

The heuristic variant replaces $w_i^l$ and $g_{j,k}^l$ in Algorithm~\ref{alg:net-utility} with $\widehat w_i^l$ and $\widehat g_{j,k}^l$, respectively; all other steps are unchanged.

\begin{table}[t]
\centering
\caption{Hyperparameters for LoRA adapter training on Qwen3-4B.}
\label{tab:lora-hyperparams}
\begin{tabular}{ll}
\toprule
Hyperparameter & Value \\
\midrule
Base model & Qwen3-4B-Instruct \\
Task type & Sequence classification (3 classes) \\
LoRA rank $r$ & 16 \\
LoRA scaling $\alpha$ & 16 \\
LoRA dropout & 0.1 \\
Target modules & $W_q$, $W_k$, $W_v$, $W_o$ \\
Optimizer & AdamW \\
Learning rate & $3\times10^{-5}$ \\
LR schedule & Linear, 6\% warmup \\
Batch size & 1 \\
Max.\ epochs & 10 \\
Max.\ sequence length & 2{,}000 \\
Evaluation interval & 4{,}000 steps \\
Early-stopping patience & 3 evaluations \\
\bottomrule
\end{tabular}
\end{table}

\subsection{A.5 Computational Complexity}
\label{app:computational-complexity}

For a rank-$r$ LoRA update $\Delta W_i=B_iA_i$, its thin SVD can be computed from QR factorizations of $B_i$ and $A_i^\top$, followed by an $r\times r$ SVD. This costs $O(r^2(d_{\mathrm{in}}+d_{\mathrm{out}}))$ per adapter without materializing the dense update, where $d_{\mathrm{in}}$ and $d_{\mathrm{out}}$ are the layer input and output widths.

At each layer, forming all $V_j^\top V_i$ costs $O(M^2r^2d_{\mathrm{in}})$ time and $O(M^2r^2)$ memory. If $C$ components are pooled, selecting them costs $O(C\log C)$

\subsection{B LoRA Training Details for QWEN3-4B adapters}

We train one LoRA adapter per NLI task on Qwen3-4B-Instruct as the base
model, using the sequence-classification formulation with a three-way
classification head. Adapters are trained with the PEFT
library; the base model weights remain frozen throughout,
and only the LoRA matrices and the classification head are updated.

\paragraph{LoRA configuration.}
LoRA modules are attached to all four attention projection matrices
($W_q$, $W_k$, $W_v$, $W_o$) in every transformer layer, with rank
$r=16$, scaling factor $\alpha=16$ (i.e., $\alpha/r=1$), and dropout of
$0.1$. The feed-forward (MLP) layers are left unmodified.

\paragraph{Tasks and label space.}
We train adapters on six NLI datasets: SNLI, MNLI, SICK, QNLI, RTE, and
SciTail. All tasks share a unified three-class output space
(0: \emph{entailment}, 1: \emph{neutral}, 2: \emph{contradiction}) so
that all adapters and classification heads remain shape-compatible for
merging. Binary datasets are mapped into this space: for RTE and QNLI,
\emph{not-entailment} is assigned to the contradiction class, and for
SciTail the \emph{neutral} label retains class~1. For each binary task,
the unused class is masked by setting its logit to $-10^{10}$ before
computing the loss and during evaluation.

% \paragraph{Tokenization.}
% Premise--hypothesis pairs are tokenized jointly with the Qwen3
% tokenizer, truncated to a maximum length of 2{,}000 tokens, and padded
% dynamically to the longest sequence in each batch.

\paragraph{Optimization.}
All adapters are trained with AdamW at a learning rate of
$3\times10^{-5}$ under a cross-entropy objective, with a linear
learning-rate schedule and 6\% warmup, for at most 10 epochs.
Validation accuracy is evaluated every 4{,}000 steps; we retain the
checkpoint with the highest validation accuracy and apply early
stopping after three consecutive evaluations without improvement. The
best checkpoint is then evaluated once on the held-out test set. The
full hyperparameter configuration is summarized in
Table~\ref{tab:lora-hyperparams}.

\end{document}